\newtheorem{Def}{Definition}[section]
\newtheorem{Eg}{Example}[section]
\newtheorem{Prop}{Proposition}[section]
\newtheorem{Thm}{Theorem}[section]
\newtheorem{Lem}{Lemma}[section]
\newtheorem{Coro}{Corollary}[section]
\newproof{Pot2}{Proof of Theorem \ref{Thomo}}
\newcommand{\ms}{\mathscr}
\newcommand{\ul}{\underline}
\newcommand{\ol}{\overline}
\newcommand{\HH}{\mathcal{H}}
\newcommand{\G}{\mathcal{G}}
\newcommand{\A}{\mathcal{A}}
\newcommand{\la}{\langle}
\newcommand{\ra}{\rangle}
\newcommand{\ha}{\hookrightarrow}
  \newcommand\figcaption{\def\@captype{figure}\caption}
  \newcommand\tabcaption{\def\@captype{table}\caption}
\begin{document}

\begin{frontmatter}


\title{Information-theoretic measures associated with rough set approximations}


\author[pz1,pz2]{Ping Zhu}
\ead{pzhubupt@gmail.com}


\author[pz2]{Qiaoyan Wen}
\ead{wqy@bupt.edu.cn}

\address[pz1]{School of Science, Beijing University of Posts and Telecommunications, Beijing 100876,
China}

\address[pz2]{State Key Laboratory of Networking and Switching, Beijing University of Posts and Telecommunications, Beijing 100876, China}

\begin{abstract}
Although some information-theoretic measures of uncertainty or granularity have been proposed in rough set theory, these measures are only dependent on the underlying partition and the cardinality of the universe, independent of the lower and upper approximations. It seems somewhat unreasonable since the basic idea of rough set theory aims at describing vague concepts by the lower and upper approximations. In this paper, we thus define new information-theoretic entropy and co-entropy functions associated to the partition and the approximations to measure the uncertainty and granularity of an approximation space. After introducing the novel notions of entropy and co-entropy, we then examine their properties. In particular, we discuss the relationship of co-entropies between different universes. The theoretical development is accompanied by illustrative numerical examples.
\end{abstract}

\begin{keyword} Rough set\sep entropy\sep co-entropy\sep uncertainty\sep granularity
\end{keyword}
\end{frontmatter}

\section{Introduction}
\label{introd} To handle inexact, uncertain or vague knowledge in some
information systems, Pawlak developed rough set theory in the early
1980s \cite{Paw82,Paw91}. Since then we have witnessed a systematic,
world-wide growth of interest in rough set theory and its applications in a number of fields, such as granular computing,
data mining, decision analysis, pattern recognition, and approximate reasoning
\cite{LC97,PS98a,PS98b,ZYO03,ZW06,Zia94}.

The starting point of rough set theory in \cite{Paw82,Paw91} is the idea that elements of a
universe having the same description are indiscernible with respect to the available information. The indiscernibility was
described by an equivalence relation in the way that two elements are related by the relation if and only if they are indiscernible from
each other. As is well known, any equivalence relation defined on a universe $U$ determines a partition of $U$ into a collection of equivalence classes (blocks): each class contains all and only the elements that are mutually equivalent among them. Any partition $\pi$ of $U$ represents a piece of knowledge about the
elements of $U$ forming a classification and so any equivalence class induced by $\pi$ is interpreted as a granule of knowledge contained in
(or supported by) $\pi$.

According to Pawlak's terminology expressed in \cite{pawlak1992rough}, any subset $X$ of the universe $U$ is called
a concept in $U$. If the concept $X$ is a union of equivalence classes from $\pi$, then $X$ is precise in $\pi$, otherwise $X$ is vague. The basic idea of rough set theory consists in replacing vague concepts with a pair of precise concepts, its lower and upper approximations \cite{pawlak1992rough}, and thus, a basic problem in this framework is to reason about the accessible granules of knowledge. To this end, various knowledge granulations (also, information granulations or granulation measures), as an average measure of knowledge granules, have been proposed and addressed in \cite{BPA98,BCC07,LSLW06,LS04,miao1998relationships,sen2009generalized,Wie99,xu2009knowledge,Yao03b,Yao03,YZ09p,Zhu09improved}. Among them, there are several information-theoretic measures of uncertainty or granularity for rough sets \cite{BPA98,BCC07,LSLW06,liang2008information,LS04,miao1998relationships,sen2009generalized,Wie99,Yao03b}, which are based upon the important notion of entropy introduced by Shannon \cite{Sha48}; for more details, we refer the reader to the excellent survey papers \cite{bianucci2009information,yao2010notes}.

It is worth noting that the information-theoretic measures mentioned above are only dependent on the sizes of equivalence classes (essentially, the underlying partition) and the cardinality of the universe, independent of the lower and upper approximation operators. For example, in \cite{DG98,miao1998relationships,Wie99,Yao03} the information entropy $H(\pi)$ of the partition $\pi=\{U_1,U_2,\ldots,U_k\}$ is defined as
\begin{equation*}
H(\pi)=-\sum_{i=1}^k\frac{n_i}{n}\log\frac{n_i}{n},
\end{equation*} where $n_i$ is the cardinality of $U_i$ and $n=\sum_{i=1}^k n_i$. As a result, it often yields that some partitions like $\{\{1\},\{2\}\}$ and $\{\{1,2\},\{3,4\}\}$ have the same entropy (or co-entropy). This seems somewhat unreasonable since the basic idea of rough set theory aims at describing vague concepts by the lower and upper approximations. In other words, the result of this description relies on both the partition and the approximations. In light of this, we should pay more attention to the lower and upper approximation operators.

The previous observation motivates us to propose another information-theoretic entropy function to measure the uncertainty associated to the partition and the approximation operators in this paper.
More concretely, given a universe $U$ with $n$ elements and a partition $\pi$ of $U$, we take count of the subsets of $U$ described by every pair of lower and upper approximations. Assume that $r_i$, $1\leq i\leq m$, is the number of subsets described by the rough set approximation $(A_i,A'_i)$ and every subset of $U$ appears with the same probability. It follows that the rough set approximation $(A_i,A'_i)$ appears with the accumulative probability $r_i/2^n$ since the amount of all subsets of $U$ is precisely $2^n$. In this way, we obtain a probability distribution
\begin{equation*}
P(\pi)=\left(\frac{r_1}{2^n},\frac{r_2}{2^n},\ldots,\frac{r_m}{2^n}\right).
\end{equation*}
It gives rise to an information entropy, say $\HH(\pi)$, according to Shannon's information theory \cite{Sha48}. On the other hand, we can get by the probability distribution a co-entropy $\G(\pi)$. It turns out that $\HH(\pi)+\G(\pi)=n$. After exploring some properties of the entropy and co-entropy, we discuss the relationships of co-entropies between different universes. Roughly speaking, the co-entropy monotonically increases when the partition becomes coarser. For example, the co-entropy of $\{\{1,2\},\{3,4\}\}$ is greater than that of $\{\{1\},\{2\}\}$.

The remainder of the paper is structured as follows. In Section 2,
we briefly review some basics of Pawlak's rough set theory and the
information-theoretic measures of uncertainty and granularity for rough sets in the literature. Section 3 is devoted to our novel notions of entropy and co-entropy and their properties. We address the relationship of co-entropies between different universes in Section 4 and conclude the paper in
Section 5 with a brief discussion on the future research.

\section{Preliminaries}
\label{prelim} This section consists of two subsections. We briefly recall the definition of Pawlak's rough sets in the first subsection and then review two information-theoretic measures of uncertainty and granularity in rough set theory in the second subsection.

\subsection{Rough sets}
We start by recalling some basic notions in Pawlak's rough set
theory \cite{Paw82,Paw91}.

Let $U$ be a finite and nonempty universal set, and let $R\subseteq
U\times U$ be an equivalence relation on $U$. Denote by $U/R$ the
set of all equivalence classes induced by $R$. Such equivalence
classes are also called {\it elementary sets}; every union (not
necessarily nonempty) of elementary sets is called a {\it definable
set}.

For any $X\subseteq U$, one can characterize $X$ by a pair of
lower and upper approximations. The {\it lower approximation}
$\ul{app}_RX$ of $X$ is defined as the greatest definable set contained
in $X$, while the {\it upper approximation} $\ol{app}_RX$ of $X$ is
defined as the least definable set containing $X$. Formally,
$$\ul{app}_RX= \cup\{C\in U/R\;|\,C\subseteq X\}\ \mbox{ and }\
\ol{app}_RX= \cup\{C\in U/R\;|\,C\cap X\neq\emptyset\}.$$ The pair $\left(\ul{app}_RX,\ol{app}_RX\right)$ is referred to as the {\it rough set approximation} of $X$. It follows
immediately from definition that $\ul{app}_RX\subseteq X\subseteq\ol{app}_RX$
for any $X\subseteq U$.

The ordered pair $\langle U,R\rangle$
is said to be an {\it approximation space}. A {\it rough set} in $\langle U,R\rangle$ is the family of all subsets of $U$ having the same lower and upper approximations. Thus, the general notion of rough set can be simply
identified with the rough approximation of any given set.

Recall that a {\it partition} of $U$ is a collection of nonempty
subsets of $U$ such that every element $x$ of $U$ is in exactly one
of these subsets; such subsets making up the partition are called {\it blocks}. We write $\Pi(U)$ for the set of all partitions of
$U$ and $\ms{P}(U)$ for the power set of $U$. It is well-known that
the notions of partition and equivalence relation are essentially
equivalent, that is, for any equivalence relation $R$ on $U$, the
set $U/R$ is a partition of $U$, and conversely, from any partition
$\pi$ of $U$, one can define an equivalence relation $R_\pi$ on $U$
such that $U/R_\pi=\pi$ in the obvious way. Thus, we sometimes say
that the ordered pair $\langle U,\pi\rangle$ is an approximation
space and write $\ul{app}_\pi X$ and $\ol{app}_\pi X$ for $\ul{app}_{R_\pi}X$ and
$\ol{app}_{R_\pi}X$, respectively. More generally, we will use
equivalence relation and partition indiscriminately.

If a universe $U$ has more than one element, it is always possible to
introduce at least two canonical partitions: One is the trivial
partition, denoted by $\check{\pi}$, consisting of a unique
block, and the other is the discrete partition, denoted
by $\hat{\pi}$, consisting of all singletons from $U$. Formally,
$$\check{\pi} =\{U\}\mbox{ and }\hat{\pi} =\{\{x\}\;|\,x\in U\}.$$

We now define a partial order ``$\preceq$" on $\Pi(U)$: For any
$\pi,\sigma\in\Pi(U)$, $\sigma\preceq\pi$ if and only if for any $C\in\sigma$,
there exists $D\in\pi$ such that $C\subseteq D$.
For instance, $\hat{\pi}\preceq\pi\preceq\check{\pi}$ for any
$\pi\in\Pi(U)$. We say that $\sigma$ is {\it finer} than $\pi$ and
that $\pi$ is {\it coarser} than $\sigma$ if $\sigma\preceq\pi$.
When $\sigma\prec\pi$, that is, $\sigma\preceq\pi$ and
$\sigma\neq\pi$, we say that $\sigma$ is {\it strictly finer} than
$\pi$ and that $\pi$ is {\it strictly coarser} than $\sigma$.
Informally, this means that $\sigma$ is a further fragmentation of
$\pi$.

\subsection{Information-theoretic measures}
In this subsection, we review two information-theoretic measures associated with rough sets in the literature. These measures are concerned with the uncertainty or granularity of knowledge provided by a partition.

In \cite{DG98,miao1998relationships,Wie99,Yao03}, Shannon entropy \cite{Sha48} has been used as a measure of information for rough set theory as follows. For subsequent need, we fix a notational convention: Throughout the paper, all logarithms are to base $2$ unless otherwise specified.
\begin{Def}[\cite{DG98,miao1998relationships,Wie99,Yao03}]\label{DgranmeaG}
Let $\langle U,\pi\rangle$ be an approximation space, where the partition $\pi$ consists of blocks $U_i$, $1\leq i\leq k$, each having cardinality $n_i$. The {\rm information entropy} $H(\pi)$ of partition $\pi$ is defined by
\begin{equation}\label{Eentropy}
H(\pi)=-\sum_{i=1}^k\frac{n_i}{n}\log\frac{n_i}{n}, \mbox{ where }n=\sum_{i=1}^k n_i.
\end{equation}
\end{Def}

When $\pi=\check{\pi}$, the entropy function $H$ achieves the minimum value $0$, and when $\pi=\hat{\pi}$, it achieves the maximum value $\log n$. Moreover, it has been shown in \cite{Wie99} that for any two partitions $\pi$ and $\sigma$ of $U$, if $\sigma\prec\pi$, then $H(\sigma)>H(\pi)$.

The equation (\ref{Eentropy}) can be rewritten as follows:
\begin{equation}\label{Eentropy'}
H(\pi)=\log n-\sum_{i=1}^k\frac{n_i}{n}\log {n_i}.
\end{equation}

Recall that the Hartley measure \cite{hartley1928} of uncertainty for a finite set $X$ is
$$H(X)=\log |X|,$$ where ``$|X|$" denotes the cardinality of the set
$X$. It measures the amount of uncertainty associated with a finite set of possible
alternatives, the nonspecificity inherent in the set.

The first term $\log n$ (i.e., $\log |U|$)
in Eq. (\ref{Eentropy'}) is exactly the Hartley measure of $U$, which is a constant independent
of any partition. The second term of the equation is basically an expectation of granularity with
respect to all blocks in a partition.
This quantity has been used by Yao to measure the granularity of a partition in \cite{Yao03} and has been defined by Liang and Shi as the rough entropy of knowledge in an approximation space in \cite{LS04}. This quantity has also been referred to as co-entropy by some scholars (see, for example, \cite{bianucci2009information,BCC07}).

\begin{Def}[\cite{bianucci2009information,BCC07,LS04,Yao03}]\label{DgranmeaRG}
Let $\langle U,\pi\rangle$ be an approximation space, where the partition $\pi$ consists of blocks $U_i$, $1\leq i\leq k$, each having cardinality $n_i$. The {\rm co-entropy} $G(\pi)$ of partition $\pi$ is defined by
\begin{equation}\label{Eroughent}
G(\pi)=\sum_{i=1}^k\frac{n_i}{n}\log n_i, \mbox{ where }n=\sum_{i=1}^k n_i.
\end{equation}
\end{Def}

It follows immediately from definition that $$H(\pi)+G(\pi)=\log n.$$ Contrary to the uncertainty measure $H$, the co-entropy function $G$ achieves the maximum value $\log n$ when $\pi=\check{\pi}$ and the minimum value $0$ when $\pi=\hat{\pi}$; moreover, it has been known \cite{LS04} that for any two partitions $\pi$ and $\sigma$ of $U$, if $\sigma\prec\pi$, then $G(\sigma)<G(\pi)$.

As argued in \cite{bianucci2009information,BCC07}, the entropy $H(\pi)$ can be interpreted as the uncertainty measure of the partition $\pi$, while the  co-entropy $G(\pi)$ can be regarded as the granularity measure of $\pi$.  In \cite{sen2009generalized}, Sen and Pal introduced two other entropy measures for crisp sets and fuzzy sets with (crisp or fuzzy) equivalence relations or (crisp or fuzzy) tolerance relations, which are based upon the roughness measures of $X$ and of the complement of $X$ in the universe and have been used to analyze the grayness and spatial ambiguities in images. Under
the same name, there are some different concepts of entropy in the literature of rough set theory (see, for example, \cite{liang2002new,QL08}).

\section{A novel pair of entropy and co-entropy}
In this section, we first introduce a novel entropy and the corresponding co-entropy and then explore their properties.

Let us begin with some notations. Throughout this section, we write $\langle U,\pi\rangle$ for an approximation space and assume that $|U|=n$.
Given a $\langle U,\pi\rangle$, we use $\mathcal{A}(U,\pi)$ to denote the set of rough set approximations of all subsets of $U$. More formally, we set
\begin{equation}\label{Ersa}
\mathcal{A}(U,\pi)=\left\{\left(\ul{app}_\pi X, \ol{app}_\pi X\right)\,\Bigl|\,X\subseteq U\right\}.
\end{equation}

It follows from Eq. (\ref{Ersa}) that $\mathcal{A}(U,\pi)$ has at least two elements: $(\emptyset,\emptyset)$ and $(U,U)$. If $n=1$, then $\mathcal{A}(U,\pi)$ exactly consists of the two elements; if $n>1$ and $\pi=\check{\pi}$, then $\mathcal{A}(U,\pi)$ contains one more element $(\emptyset,U)$; for any $n\geq1$, if $\pi=\hat{\pi}$, then we see that $\mathcal{A}(U,\pi)=\left\{\left(X, X\right)\,|\,X\subseteq U\right\}$, which consists of $2^{n}$ elements. Note that the set $\mathcal{A}(U,\pi)$ is not a multiset, that is, the same element cannot appear more than once in $\mathcal{A}(U,\pi)$. In general, we have that $|\mathcal{A}(U,\pi)|\leq2^n$ since the subset $X$ of $U$ in Eq. (\ref{Ersa}) has only $2^n$ alternatives.

For simplicity, we use $m$ to stand for $|\mathcal{A}(U,\pi)|$. For any $(A_i,A'_i)\in\mathcal{A}(U,\pi)$, $1\leq i\leq m$, we set
\begin{equation}\label{Enumber}
\A_i=\left\{X\subseteq U\,\Bigl|\,\left(\ul{app}_\pi X, \ol{app}_\pi X\right)=(A_i,A'_i)\right\}\mbox{ and } |\A_i|=r_i.
\end{equation}
In other words, $r_i$ is the number of subsets of $U$ that have the rough set approximation $(A_i,A'_i)$. It turns out that $\{\A_1,\A_2,\ldots,\A_m\}$ gives rise to a partition of $\ms{P}(U)$. Therefore, we get by Eq. (\ref{Ersa}) that
$$\sum_{i=1}^m r_i=2^n.$$


To illustrate the above concepts, let us see an example.
\begin{Eg}\label{Enovel} Consider $U=\{1,2,3,4\}$ and $\pi=\{\{1,2\},\{3,4\}\}$. In this case, $U$ has $16$ subsets. For each subset $X$ of $U$, we compute the rough set approximation of $X$; the results are listed in Table \ref{Tnovel}.
\begin{table}[htb]\centering\tabcaption{The subsets and corresponding rough set approximations in Example \ref{Enovel}.}\vspace{0.2cm}
\begin{tabular}{|cc|c c|cc|c c|}\hline
subset & approximation & subset & approximation &subset & approximation &subset & approximation  \\
\hline
$\emptyset$ &$\left(\emptyset,\emptyset\right)$ &$\{1\}$&$\left(\emptyset,\{1,2\}\right)$& $\{2\}$&$\left(\emptyset,\{1,2\}\right)$ &$\{3\}$&$\left(\emptyset,\{3,4\}\right)$\\ \hline
$\{4\}$ &$\left(\emptyset,\{3,4\}\right)$ &$\{1,2\}$&$\left(\{1,2\},\{1,2\}\right)$& $\{1,3\}$&$\left(\emptyset,U\right)$ &$\{1,4\}$&$\left(\emptyset,U\right)$\\ \hline
$\{2,3\}$ &$\left(\emptyset,U\right)$ &$\{2,4\}$&$\left(\emptyset,U\right)$& $\{3,4\}$&$\left(\{3,4\},\{3,4\}\right)$ &$\{1,2,3\}$&$\left(\{1,2\},U\right)$\\ \hline
$\{1,2,4\}$ &$\left(\{1,2\},U\right)$ &$\{1,3,4\}$&$\left(\{3,4\},U\right)$& $\{2,3,4\}$&$\left(\{3,4\},U\right)$ &$U$&$\left(U,U\right)$\\\hline
\end{tabular}
\label{Tnovel}
\end{table}

Hence, we see that
$$\mathcal{A}(U,\pi)=\left\{\left(\emptyset,\emptyset\right),\left(\emptyset,\{1,2\}\right),\left(\emptyset,\{3,4\}\right),
\left(\{1,2\},\{1,2\}\right),\left(\emptyset,U\right),\left(\{3,4\},\{3,4\}\right),\left(\{1,2\},U\right),\left(\{3,4\},U\right), \left(U,U\right)\right\}.$$ As an example, let us calculate $r_2$. By definition,
\begin{eqnarray*}
  r_2 &=& \left|\left\{X\subseteq U\,\Bigl|\,\left(\ul{app}_\pi X, \ol{app}_\pi X\right)=\left(\emptyset,\{1,2\}\right)\right\}\right| \\
   &=&\left|\left\{\{1\},\{2\} \right\}\right|\\
   &=&2.
\end{eqnarray*}This is exactly the number of subsets of $U$ that have the rough set approximation $\left(\emptyset,\{1,2\}\right)$, which can be counted from the table. In light of this, we may get Table \ref{Tnovel2} by  rearranging Table \ref{Tnovel}. It follows immediately from Table \ref{Tnovel2} that $r_1=r_4=r_6=r_9=1$, $r_2=r_3=r_7=r_8=2$, and $r_5=4$.
\begin{table}[htb]\centering\tabcaption{The rough set approximations and corresponding subsets in Example \ref{Enovel}.}\vspace{0.2cm}
\begin{tabular}{|cc|c c|cc|}\hline
approximation &subsets  & approximation & subsets & approximation &subsets \\
\hline
$\left(\emptyset,\emptyset\right)$ &$\emptyset$ &$\left(\emptyset,\{1,2\}\right)$&$\{1\}, \{2\}$ & $\left(\emptyset,\{3,4\}\right)$&$\{3\}, \{4\}$\\ \hline
 $\left(\{1,2\},\{1,2\}\right)$&$\{1,2\}$&$\left(\emptyset,U\right)$ & $\{1,3\},\{1,4\},\{2,3\},\{2,4\}$&$\left(\{3,4\},\{3,4\}\right)$ &$\{3,4\}$\\ \hline
 $\left(\{1,2\},U\right)$&$\{1,2,3\},\{1,2,4\}$&$\left(\{3,4\},U\right)$&$\{1,3,4\},\{2,3,4\}$&$\left(U,U\right)$& $U$\\\hline
\end{tabular}
\label{Tnovel2}
\end{table}
\end{Eg}

Because we are concerned with the partition granulation of $\langle U,\pi\rangle$ with respect to the
approximation operators $\ul{app}$ and $\ol{app}$, we may assume that every subset of $U$ appears with the same probability $1/2^n$. As a result, the rough set approximation $(A_i,A'_i)$ appears with the accumulative probability $r_i/2^n$ and we thus obtain a probability distribution
\begin{equation}\label{Eprobdist}
P(\pi)=\left(\frac{r_1}{2^n},\frac{r_2}{2^n},\ldots,\frac{r_m}{2^n}\right).
\end{equation}

According to Shannon's information theory \cite{Sha48}, the Shannon entropy function of the probability distribution $P(\pi)$ is defined as follows.
\begin{Def}\label{DShannonent}Keep the notations as above. The {\rm information entropy} $\HH(\pi)$ of $\langle U,\pi\rangle$ (with respect to the approximation operators $\ul{app}$ and $\ol{app}$) is defined by
\begin{equation}\label{EShannonent}
\HH(\pi)=\HH(P(\pi))=-\sum_{i=1}^m\frac{r_i}{2^n}\log\frac{r_i}{2^n}.
\end{equation}
\end{Def}

In the above definition, for simplicity we have used the notation $\HH(\pi)$ instead of $\HH(U,\pi)$. Following the explanation of Shannon entropy in information theory, the quantity $\HH(\pi)$ measures the uncertainty associated to the partition $\pi$ with respect to the approximation operators $\ul{app}$ and $\ol{app}$.
For instance, the probability distribution corresponding to the partition $\pi=\{\{1,2\},\{3,4\}\}$ in Example \ref{Enovel} is $$P(\pi)=\left(\frac{1}{2^4},\frac{2}{2^4},\frac{2}{2^4},\frac{1}{2^4},\frac{4}{2^4},\frac{1}{2^4},\frac{2}{2^4},
\frac{2}{2^4},\frac{1}{2^4}\right).$$
It follows from Definition \ref{DShannonent} that
\begin{eqnarray*}
\HH(\pi)&=& -\sum_{i=1}^9\frac{r_i}{2^4}\log\frac{r_i}{2^4} \\
 &=& -\Biggl[\frac{1}{2^4}\log\frac{1}{2^4}+\frac{2}{2^4}\log\frac{2}{2^4}+\frac{2}{2^4}\log\frac{2}{2^4}
 +\frac{1}{2^4}\log\frac{1}{2^4}+\frac{4}{2^4}\log\frac{4}{2^4}\\&&\quad+\frac{1}{2^4}\log\frac{1}{2^4}+\frac{2}{2^4}\log\frac{2}{2^4}+\frac{2}{2^4}\log\frac{2}{2^4}
 +\frac{1}{2^4}\log\frac{1}{2^4}\Biggr]\\
 &=&3.
\end{eqnarray*}

Similar to other entropy functions in rough set theory, the information entropy in Definition \ref{DShannonent} has the following properties.
\begin{Thm}\label{Pentfunc}\
\begin{enumerate}[(1)]
  \item For any $\pi,\sigma\in\Pi(U)$, if $\sigma\prec\pi$, then $\HH(\sigma)>\HH(\pi)$.
  \item The entropy function $\HH$ reaches the maximum value $n$ for the
finest partition $\hat{\pi}$.
  \item The entropy function $\HH$ reaches the minimum value $n-\frac{2^n-2}{2^n}\log(2^n-2)$ for the coarsest partition
$\check{\pi}$.
\end{enumerate}
\end{Thm}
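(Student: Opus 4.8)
The plan is to reduce $\HH(\pi)$ to a sum of one term per block of $\pi$ and then read off the three assertions.

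\textbf{Step 1 (an additive formula).} Fix $X\subseteq U$ and, for a block $D\in\pi$, call the restriction $X\cap D$ \emph{full} if it equals $D$, \emph{empty} if it is $\emptyset$, and a \emph{boundary} otherwise (the latter requires $|D|\ge 2$). Since $X\mapsto(X\cap D)_{D\in\pi}$ is a bijection of $\ms{P}(U)$ onto $\prod_{D\in\pi}\ms{P}(D)$, and since $\left(\ul{app}_\pi X,\ol{app}_\pi X\right)$ is the union of the full blocks paired with the union of the full and boundary blocks --- a correspondence which is in fact a bijection onto $\A(U,\pi)$, because $\ul{app}_\pi X$ and $\ol{app}_\pi X$ are definable, so each block's label can be recovered from the pair --- the set $\A(U,\pi)$ is in bijection with the label-assignments that occur, and the multiplicity $r_i$ of an assignment equals $\prod_{D\in\pi}c_D$ with $c_D=1$ when $D$ is full or empty and $c_D=2^{|D|}-2$ when $D$ is a boundary. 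Hence $P(\pi)$ is the product of the independent per-block laws $\bigl(2^{-|D|},\,2^{-|D|},\,(2^{|D|}-2)2^{-|D|}\bigr)$, and additivity of Shannon entropy over independent coordinates yields
\begin{equation*}
\HH(\pi)=\sum_{D\in\pi}f(|D|),\qquad f(d):=d-\frac{2^{d}-2}{2^{d}}\log(2^{d}-2)\quad(\text{using }0\log 0=0,\ \text{so }f(1)=1).
\end{equation*}

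\textbf{Step 2 (assertion (1), the crux).} It suffices to prove that $f$ is strictly subadditive: $f(a)+f(b)>f(a+b)$ for all integers $a,b\ge 1$. Granting this, if $\sigma\prec\pi$ then every block $D\in\pi$ is the disjoint union of the blocks $E^{D}_{1},\dots,E^{D}_{p_{D}}$ of $\sigma$ inside it; iterating subadditivity gives $\sum_{j}f(|E^{D}_{j}|)\ge f(|D|)$, strictly whenever $p_{D}\ge 2$, and since $\sigma\ne\pi$ some $p_{D}\ge 2$, so summing over $D\in\pi$ gives $\HH(\sigma)=\sum_{E\in\sigma}f(|E|)>\sum_{D\in\pi}f(|D|)=\HH(\pi)$. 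For the strict subadditivity I would use the reading of Step~1 that $f(d)$ is the entropy of the label of a uniformly random subset of a $d$-element set: if $B=B_{1}\sqcup B_{2}$ with $|B_{1}|,|B_{2}|\ge 1$, the label of $X\cap B$ is a deterministic function of the (independent) pair of labels of $X\cap B_{1}$ and $X\cap B_{2}$ (full iff both full, empty iff both empty, boundary otherwise), so the coarsening property of Shannon entropy gives $f(|B|)\le f(|B_{1}|)+f(|B_{2}|)$; the inequality is strict because that function merges the two positive-probability outcomes $(\text{full},\text{empty})$ and $(\text{empty},\text{full})$.

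\textbf{Step 3 (assertions (2) and (3)).} Since $\frac{2^{d}-2}{2^{d}}\log(2^{d}-2)\ge 0$, with equality only for $d=1$, Step~1 gives $\HH(\pi)=\sum_{D\in\pi}f(|D|)\le\sum_{D\in\pi}|D|=n$, with equality precisely when every block is a singleton, i.e.\ $\pi=\hat\pi$, where $\HH(\hat\pi)=n f(1)=n$; this is (2). For (3), Step~1 gives $\HH(\check\pi)=f(n)=n-\frac{2^{n}-2}{2^{n}}\log(2^{n}-2)$, and this is the minimum by assertion (1), since $\check\pi$ is coarser than every other partition of $U$ (for $n=1$ there is nothing to prove, $\check\pi=\hat\pi$ being the only partition). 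The main obstacle is Step~1 --- recognizing the product form of $P(\pi)$ forced by the fact that a rough set approximation is determined exactly by the full/empty/boundary status of each block; after that, (1)--(3) follow from the elementary one-variable function $f$, with only the strict subadditivity of $f$ needing the short argument above.
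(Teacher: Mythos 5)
Your proof is correct, but it takes a genuinely different route from the paper's. The paper proves (1) by working directly with the induced partition $\{\A_1,\dots,\A_m\}$ of the power set $\ms{P}(U)$: it reduces (without loss of generality) to the case where $\sigma$ splits a single block of $\pi$ in two, observes that the $\sigma$-induced partition of $\ms{P}(U)$ then strictly refines the $\pi$-induced one (each $r_i$ is either some $s_j$ or a sum $\sum_j s_{i_j}$ of several of them), and concludes via the inequality $\bigl(\sum_j s_{i_j}\bigr)^{\sum_j s_{i_j}}>\prod_j s_{i_j}^{s_{i_j}}$, i.e.\ the standard fact that refining a partition of a uniformly weighted set strictly increases Shannon entropy; (2) and (3) are then obtained by direct computation of $\HH(\hat\pi)$ and $\HH(\check\pi)$, using (1) only for extremality. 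You instead exploit the product structure: the rough set approximation of $X$ is equivalent to the full/empty/boundary label vector $(X\cap D)_{D\in\pi}$, these labels are independent under the uniform law on $\ms{P}(U)$, and entropy is additive over independent coordinates, giving the closed form $\HH(\pi)=\sum_{D\in\pi}f(|D|)$ with $f(d)=d-\frac{2^d-2}{2^d}\log(2^d-2)$. Your strict subadditivity of $f$ is in essence the paper's merging argument localized to a single block (the merged outcomes being $(\text{full},\text{empty})$ and $(\text{empty},\text{full})$), and your monotonicity argument likewise avoids the paper's reduction to single-block splits. What your approach buys is the explicit formula for $\HH(\pi)$ in terms of block sizes alone (which makes (2) and (3) immediate, identifies the extremizers, and in fact recovers the paper's Example, $\HH(\{\{1,2\},\{3,4\}\})=2f(2)=3$); what the paper's approach buys is that it needs no structural analysis of $\A(U,\pi)$ beyond the refinement relation between the induced partitions of $\ms{P}(U)$, so it would survive in settings where the per-block independence fails.
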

\begin{proof}
(1) Without loss of generality, we may assume that $\pi=\{U_1,U_2,\ldots,U_k\}$ and $\sigma=\{U_a,U_b,U_2,\ldots,U_k\}$, where $U_a\cup U_b=U_1$. Suppose that $|\mathcal{A}(U,\pi)|=m$ and for any $(A_i,A'_i)\in\mathcal{A}(U,\pi)$, $1\leq i\leq m$, we write $r_i$ for
\begin{equation*}
\left|\left\{X\subseteq U\,\Bigl|\,\left(\ul{app}_\pi X, \ol{app}_\pi X\right)=(A_i,A'_i)\right\}\right|.
\end{equation*}
Based on the partition $\pi$, the power set $\ms{P}(U)$ is partitioned into $m$ blocks and the $i$-th block has the cardinality $r_i$. Similarly, we denote by $s_j$ the cardinality of the $j$-th block of $\ms{P}(U)$ associated to the partition $\sigma$. We now consider the elements of $\mathcal{A}(U,\sigma)$. For any $(B_j,B'_j)\in\mathcal{A}(U,\sigma)$, there are two possibilities: One is that $(B_j,B'_j)\in\mathcal{A}(U,\pi)$, say $(B_j,B'_j)=(A_{i_j},A'_{i_j})$ for some $i_j$. In this case, it is clear that $s_j=r_{i_j}$. The other case is that $(B_j,B'_j)\in\mathcal{A}(U,\sigma)\backslash\mathcal{A}(U,\pi)$, where the symbol $A\backslash B$ denotes the set of all elements which are members of $A$ but not members of $B$. It follows that for some $i_j$,
$$\left\{X\subseteq U\,\Bigl|\,\left(\ul{app}_\sigma X, \ol{app}_\sigma X\right)=(B_j,B'_j)\right\}\subsetneq\left\{X\subseteq U\,\Bigl|\,\left(\ul{app}_\pi X, \ol{app}_\pi X\right)=(A_{i_j},A'_{i_j})\right\},$$ because the partition $\sigma$ is strictly finer than $\pi$. In this case, we also see that the $i_j$-th block provided by $\pi$ is partitioned into smaller blocks and thus $r_{i_j}=\sum_j s_j>s_j$. In summary, we get that either $r_i=s_j$ or $r_i=\sum_j s_{i_j}>s_{i_j}$, and moreover, the latter case must exist as $\sigma\prec\pi$. We thus assume that $r_i=s_{i_j}$ for $i\in I_1$ and $r_i=\sum_j s_{i_j}>s_{i_j}$ for $i\in I_2$, where $I_2\neq\emptyset$ and $I_1\cup I_2=\{1,2\ldots,m\}$. Let us compare $\HH(\sigma)$ with $\HH(\pi)$.
\begin{eqnarray*}
\HH(\pi)&=&-\sum_{i=1}^m\frac{r_i}{2^n}\log\frac{r_i}{2^n} \\
  &=&-\sum_{i\in I_1}\frac{r_i}{2^n}\log\frac{r_i}{2^n}-\sum_{i\in I_2}\frac{r_i}{2^n}\log\frac{r_i}{2^n} \\
  &=&-\sum_{i\in I_1}\frac{s_{i_j}}{2^n}\log\frac{s_{i_j}}{2^n}-\sum_{i\in I_2}\frac{\sum_j s_{i_j}}{2^n}\log\frac{\sum_j s_{i_j}}{2^n} \\
  &=& -\sum_{i\in I_1}\frac{s_{i_j}}{2^n}\log\frac{s_{i_j}}{2^n}-\frac{1}{2^n}\sum_{i\in I_2}\left(\sum_j s_{i_j}\right)\left[\log\left(\sum_j s_{i_j}\right)-n\right] \\
  &=& -\sum_{i\in I_1}\frac{s_{i_j}}{2^n}\log\frac{s_{i_j}}{2^n}-\frac{1}{2^n}\sum_{i\in I_2}\left[\log\left(\sum_j s_{i_j}\right)^{\left(\sum_j s_{i_j}\right)}-n\left(\sum_j s_{i_j}\right)\right]\\
  &<&-\sum_{i\in I_1}\frac{s_{i_j}}{2^n}\log\frac{s_{i_j}}{2^n}-\frac{1}{2^n}\sum_{i\in I_2}\left[\log\left(\prod_j s_{i_j}^{s_{i_j}}\right)-n\left(\sum_j s_{i_j}\right)\right]\\
  &=&-\sum_{i\in I_1}\frac{s_{i_j}}{2^n}\log\frac{s_{i_j}}{2^n}-\frac{1}{2^n}\sum_{i\in I_2}\left[\sum_js_{i_j}\log s_{i_j}-n\left(\sum_j s_{i_j}\right)\right]\\
  &=&-\sum_{i\in I_1}\frac{s_{i_j}}{2^n}\log\frac{s_{i_j}}{2^n}-\sum_{i\in I_2}\sum_j\frac{s_{i_j}}{2^n}\log\frac{s_{i_j}}{2^n}\\
  &=&\HH(\sigma),
\end{eqnarray*}
namely, $\HH(\sigma)>\HH(\pi)$. Therefore, the clause (1) holds.

(2) It follows from (1) that $\HH$ reaches the maximum value when $\pi=\hat{\pi}$. In this case, we get by definition that
$$\HH(\hat{\pi})=-\sum_{i=1}^{2^n}\frac{1}{2^n}\log\frac{1}{2^n}=n.$$ This proves (2).

(3) By (1), we see that $\HH$ reaches the minimum value when $\pi=\check{\pi}$. In this case, the empty subset $\emptyset$ of $U$ has the rough set approximation $(\emptyset,\emptyset)$ and $U$ itself has the rough set approximation $(U,U)$. For any proper subset of $U$, if any, it has the rough set approximation $(\emptyset,U)$. Hence, $r_1=r_2=1$ and $r_3=2^n-2$. We thus obtain by definition that
\begin{eqnarray*}
\HH(\check{\pi}) &=&-\frac{1}{2^n}\log\frac{1}{2^n}-\frac{1}{2^n}\log\frac{1}{2^n}-\frac{2^n-2}{2^n}\log\frac{2^n-2}{2^n} \\
   &=&n-\frac{2^n-2}{2^n}\log(2^n-2).
\end{eqnarray*} Whence, (3) holds, finishing the proof of the proposition.
\end{proof}

Note that in the clause (3) of Theorem \ref{Pentfunc}, if $n=1$, the value of the corresponding summand $0\log0$ is taken to be $0$, which is consistent with the limit: $$\lim_{x\rightarrow0+}x\log x=0.$$

For later need, let us recall the following definition from \cite{Zhu09improved2}.
\begin{Def}\label{Dhomo}
Let $\langle U,\pi\rangle$ and $\langle V,\sigma\rangle$ be two
approximation spaces, and suppose that $f: U\longrightarrow V$ is a
mapping.
\begin{enumerate}[(1)]
  \item The mapping $f$ is called a {\rm homomorphism} from $\langle U,\pi\rangle$ to
  $\langle V,\sigma\rangle$ if for any $C\in\pi$,
there exists $D\in\sigma$ such that $f(C)\subseteq D$, where $f(C)=\{f(u)\,|\:u\in C\}$.
  \item A homomorphism $f$ is called a {\rm monomorphism} if
 $f$ is an injective mapping.
\item A monomorphism $f$ is called {\rm strictly monomorphic} if either
there exist $C\in\pi$ and $D\in\sigma$ such that $f(C)\subsetneq D$,
namely, $f(C)\subseteq D$ and $f(C)\neq D$, or $|V|>|U|$.
  \item The mapping $f$ is called an {\rm isomorphism} if the mapping $f: U\longrightarrow V$ is
bijective, and moreover, both $f$ and its inverse mapping $f^{-1}$
 are homomorphisms.
\end{enumerate}
\end{Def}

We can now state the following facts.
\begin{Prop}\label{Phomoentropy}
Let $\langle U,\pi\rangle$ and $\langle V,\sigma\rangle$ be two
approximation spaces with $|U|=|V|$, and let $f: U\longrightarrow V$ be a
mapping.
\begin{enumerate}[(1)]
  \item If $f$ is a monomorphism from $\langle U,\pi\rangle$ to $\langle V,\sigma\rangle$, in particular, $\pi\preceq\sigma$, then $\HH(\pi)\geq\HH(\sigma)$.
\item If $f$ is a strict monomorphism from $\langle U,\pi\rangle$ to $\langle V,\sigma\rangle$, in particular, $\pi\prec\sigma$, then $\HH(\pi)>\HH(\sigma)$.
  \item If $f$ is an isomorphism from $\langle U,\pi\rangle$ to $\langle V,\sigma\rangle$, then $\HH(\pi)=\HH(\sigma)$.
\end{enumerate}
\end{Prop}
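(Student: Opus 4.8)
The plan is to reduce all three statements to Theorem~\ref{Pentfunc}(1) by exploiting the observation that, when $|U|=|V|$, an injective mapping $f:U\longrightarrow V$ is automatically bijective, and a bijection is merely a relabelling of the universe that leaves the entropy unchanged.

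First I would establish a \emph{relabelling lemma}: if $g:U\longrightarrow V$ is a bijection and $g(\pi):=\{g(C)\,|\,C\in\pi\}$ denotes the push-forward partition of $V$ (it is a partition since $g$ is bijective), then $\HH(U,\pi)=\HH(V,g(\pi))$. The key point is that the induced bijection $\ms{P}(U)\longrightarrow\ms{P}(V)$, $X\mapsto g(X)$, intertwines the approximation operators: because $g$ is injective one has $C\subseteq X\Leftrightarrow g(C)\subseteq g(X)$ and $C\cap X\neq\emptyset\Leftrightarrow g(C)\cap g(X)\neq\emptyset$ for every block $C$, whence $g(\ul{app}_\pi X)=\ul{app}_{g(\pi)}g(X)$ and $g(\ol{app}_\pi X)=\ol{app}_{g(\pi)}g(X)$. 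Consequently $g$ carries each block $\A_i=\{X\subseteq U\,|\,(\ul{app}_\pi X,\ol{app}_\pi X)=(A_i,A'_i)\}$ of the partition of $\ms{P}(U)$ bijectively onto the corresponding block of the partition of $\ms{P}(V)$ associated to $g(\pi)$, so the probability distributions $P(\pi)$ and $P(g(\pi))$ of Eq.~(\ref{Eprobdist}) coincide up to a permutation of their coordinates; since Shannon entropy is invariant under permutation of the probabilities, the two entropies agree.

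Next I would translate the hypotheses into statements about $f(\pi)$ and $\sigma$ inside $\Pi(V)$. As $|U|=|V|$ is finite, a monomorphism $f$ is a bijection, and the homomorphism condition ``for each $C\in\pi$ there is $D\in\sigma$ with $f(C)\subseteq D$'' says precisely that every block of $f(\pi)$ lies in a block of $\sigma$, i.e.\ $f(\pi)\preceq\sigma$. If moreover $f$ is a strict monomorphism then (the alternative $|V|>|U|$ being excluded here) some $f(C)$ is properly contained in some $D\in\sigma$, so $f(\pi)\neq\sigma$ and hence $f(\pi)\prec\sigma$. If $f$ is an isomorphism, applying the homomorphism condition to $f^{-1}$ as well forces $f(\pi)=\sigma$. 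The ``in particular'' special cases $\pi\preceq\sigma$ and $\pi\prec\sigma$ are recovered by taking $f$ to be the identity on $U=V$.

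With these two ingredients the conclusion is immediate: by the relabelling lemma $\HH(\pi)=\HH(V,f(\pi))$, and then Theorem~\ref{Pentfunc}(1) applied to the partitions $f(\pi)$ and $\sigma$ of $V$ yields $\HH(V,f(\pi))\geq\HH(V,\sigma)=\HH(\sigma)$ in case (1) (with equality precisely when $f(\pi)=\sigma$, which explains why (1) is stated with ``$\geq$''), $\HH(V,f(\pi))>\HH(\sigma)$ in case (2) since $f(\pi)\prec\sigma$, and $\HH(V,f(\pi))=\HH(\sigma)$ in case (3) since $f(\pi)=\sigma$. I expect the only genuinely non-formal step to be the verification that the approximation operators commute with the relabelling bijection; the remainder is bookkeeping with the definitions of homomorphism, monomorphism, and isomorphism.
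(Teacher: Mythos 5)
Your proposal is correct and follows essentially the same route as the paper, which simply asserts that the proposition ``follows immediately from Definition \ref{DShannonent} and Theorem \ref{Pentfunc}.'' You have in fact supplied the details the paper leaves implicit --- namely the relabelling lemma showing that a bijection intertwines the approximation operators and hence preserves $\HH$, and the translation of the (strict) monomorphism hypothesis into $f(\pi)\preceq\sigma$ (resp.\ $f(\pi)\prec\sigma$, $f(\pi)=\sigma$) so that Theorem \ref{Pentfunc}(1) applies.
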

\begin{proof}
It follows immediately from Definition \ref{DShannonent} and Theorem \ref{Pentfunc}.
\end{proof}

To measure the granularity with respect to the approximation operators $\ul{app}$ and $\ol{app}$ carried by the partition $\pi$, we introduce the concept of co-entropy, which corresponds to the information entropy in Definition \ref{DShannonent}.
\begin{Def}\label{Dnovelcoentropy}
Keep the notations as in Definition \ref{DShannonent}. The {\rm co-entropy} $\G(\pi)$ of $\langle U,\pi\rangle$ (with respect to the approximation operators $\ul{app}$ and $\ol{app}$) is defined by
\begin{equation}\label{Enovelcoentropy}
\G(\pi)=\G(P(\pi))=\sum_{i=1}^m\frac{r_i}{2^n}\log {r_i}.
\end{equation}
\end{Def}

The quantity $\G(\pi)$ furnishes a measure of the average granularity carried by
the partition $\pi$ as a whole. It follows immediately from definition that
\begin{equation}\label{EqHHG}
\HH(\pi)+\G(\pi)=n.
\end{equation}
It means that the two measures complement each other with respect to the
constant quantity $n=|U|$, which is invariant with respect to the choice of the
partition $\pi$ of $U$.

The co-entropy function $\G$ is of the following properties.
\begin{Thm}\label{Pcoentfunc}\
\begin{enumerate}[(1)]
  \item For any $\pi,\sigma\in\Pi(U)$, if $\sigma\prec\pi$, then $\G(\sigma)<\G(\pi)$.
  \item The co-entropy function $\G$ reaches the minimum value $0$ for the
finest partition $\hat{\pi}$.
  \item The co-entropy function $\G$ reaches the maximum value $\frac{2^n-2}{2^n}\log(2^n-2)$ for the coarsest partition
$\check{\pi}$.
\end{enumerate}
\end{Thm}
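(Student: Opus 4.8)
The plan is to obtain all three claims as immediate consequences of the complementarity identity $\HH(\pi)+\G(\pi)=n$ from Eq.~(\ref{EqHHG}) together with Theorem~\ref{Pentfunc}. Since $n=|U|$ does not depend on the partition, the assignment $\pi\mapsto\G(\pi)=n-\HH(\pi)$ is an order-reversing affine transform of $\pi\mapsto\HH(\pi)$, so each monotonicity and extremality statement for $\HH$ flips to the corresponding statement for $\G$.

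Concretely, for clause~(1) I would take $\sigma\prec\pi$, invoke Theorem~\ref{Pentfunc}(1) to get $\HH(\sigma)>\HH(\pi)$, and conclude $\G(\sigma)=n-\HH(\sigma)<n-\HH(\pi)=\G(\pi)$. For clause~(2), since $\hat{\pi}\preceq\pi$ for every $\pi\in\Pi(U)$, clause~(1) shows $\G$ is minimized at $\hat{\pi}$, and $\G(\hat{\pi})=n-\HH(\hat{\pi})=n-n=0$ by Theorem~\ref{Pentfunc}(2). For clause~(3), since $\pi\preceq\check{\pi}$ for every $\pi$, clause~(1) shows $\G$ is maximized at $\check{\pi}$, and by Theorem~\ref{Pentfunc}(3),
\[
\G(\check{\pi})=n-\HH(\check{\pi})=n-\left(n-\frac{2^n-2}{2^n}\log(2^n-2)\right)=\frac{2^n-2}{2^n}\log(2^n-2).
\]

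There is essentially no obstacle here: the statement is a corollary of Theorem~\ref{Pentfunc} and Eq.~(\ref{EqHHG}). Should a self-contained argument be desired, one can instead imitate the inequality chain in the proof of Theorem~\ref{Pentfunc}(1), replacing each term $-\frac{r}{2^n}\log\frac{r}{2^n}$ by $\frac{r}{2^n}\log r$ and using the strict superadditivity of $t\mapsto t\log t$, i.e. $\left(\sum_j s_{i_j}\right)\log\left(\sum_j s_{i_j}\right)>\sum_j s_{i_j}\log s_{i_j}$ whenever there are at least two positive summands; but this merely re-derives what (\ref{EqHHG}) already supplies, so I would present the complementarity route. As in Theorem~\ref{Pentfunc}(3), the convention $0\log 0=0$ takes care of the degenerate case $n=1$.
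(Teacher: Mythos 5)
Your proposal is correct and follows exactly the route the paper takes: its proof of this theorem is the one-line observation that all three clauses follow from Theorem~\ref{Pentfunc} together with the identity $\HH(\pi)+\G(\pi)=n$ of Eq.~(\ref{EqHHG}). You have simply written out the details of that same derivation.
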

\begin{proof}
All the clauses follow directly from Theorem \ref{Pentfunc} and Eq. (\ref{EqHHG}).
\end{proof}

Similar to Proposition \ref{Phomoentropy}, we have the following observation.
\begin{Prop}\label{Phomocoentropy}
Let $\langle U,\pi\rangle$ and $\langle U,\sigma\rangle$ be two
approximation spaces with $|U|=|V|$, and let $f: U\longrightarrow V$ be a
mapping.
\begin{enumerate}[(1)]
  \item If $f$ is a monomorphism from $\langle U,\pi\rangle$ to $\langle V,\sigma\rangle$, in particular, $\pi\preceq\sigma$, then $\G(\pi)\leq\G(\sigma)$.
\item If $f$ is a strict monomorphism from $\langle U,\pi\rangle$ to $\langle V,\sigma\rangle$, in particular, $\pi\prec\sigma$, then $\G(\pi)<\G(\sigma)$.
  \item If $f$ is an isomorphism from $\langle U,\pi\rangle$ to $\langle V,\sigma\rangle$, then $\G(\pi)=\G(\sigma)$.
\end{enumerate}
\end{Prop}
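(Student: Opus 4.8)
The plan is to obtain all three clauses from Proposition~\ref{Phomoentropy} by exploiting the complementarity identity~(\ref{EqHHG}). Since the two approximation spaces have ground sets of the same size, $n=|U|=|V|$, applying~(\ref{EqHHG}) to $\langle U,\pi\rangle$ and to $\langle V,\sigma\rangle$ gives $\G(\pi)=n-\HH(\pi)$ and $\G(\sigma)=n-\HH(\sigma)$, hence
\begin{equation*}
\G(\sigma)-\G(\pi)=\HH(\pi)-\HH(\sigma).
\end{equation*}
Thus any inequality between the two entropies translates directly into the reversed inequality between the two co-entropies, and an equality into an equality.

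First I would handle clause~(1): when $f$ is a monomorphism from $\langle U,\pi\rangle$ to $\langle V,\sigma\rangle$, Proposition~\ref{Phomoentropy}(1) yields $\HH(\pi)\geq\HH(\sigma)$, so the displayed identity gives $\G(\sigma)-\G(\pi)\geq0$, i.e.\ $\G(\pi)\leq\G(\sigma)$. Clause~(2) is the same argument with strict inequalities, using Proposition~\ref{Phomoentropy}(2), and clause~(3) is again the same, using Proposition~\ref{Phomoentropy}(3). Finally, the ``in particular'' statements are the special case $U=V$ with $f$ the identity map: if $\pi\preceq\sigma$ (resp.\ $\pi\prec\sigma$) then the identity is a monomorphism (resp.\ a strict monomorphism) in the sense of Definition~\ref{Dhomo}, so the conclusions follow from the cases already treated.

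Because the whole argument is a one-line consequence of results proved earlier in the excerpt, there is no real obstacle. The single point deserving attention is that the constant appearing in~(\ref{EqHHG}) must be the \emph{same} for both spaces when we subtract; this is exactly what the hypothesis $|U|=|V|$ guarantees, and I would state it explicitly so that the necessity of that hypothesis is transparent. (One should also note the harmless typographical slip in the statement, where $\langle U,\sigma\rangle$ should read $\langle V,\sigma\rangle$, consistent with the hypothesis $|U|=|V|$ and with Proposition~\ref{Phomoentropy}.)
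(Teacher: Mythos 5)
Your proposal is correct and follows exactly the paper's own route: the paper proves this proposition by combining Proposition~\ref{Phomoentropy} with the complementarity identity~(\ref{EqHHG}), which is precisely your argument, only written out in more detail (including the useful explicit remark that the hypothesis $|U|=|V|$ is what makes the constant in~(\ref{EqHHG}) the same for both spaces). Nothing further is needed.
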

\begin{proof}
It follows immediately from Proposition \ref{Phomoentropy} and Eq. (\ref{EqHHG}).
\end{proof}

As a corollary of Theorem \ref{Phomocoentropy} and Proposition \ref{Pcoentfunc}, we see that $\G$ is a partition measure on $U$ in the sense of \cite[Definition 3.4]{Zhu09improved2}, that is, $\G$ is nonnegative and satisfies the following two conditions: $\G(\sigma)<\G(\pi)$ if $\sigma\prec\pi$; $\G(\pi)=\G(\sigma)$ if there exists an isomorphism from $\langle U,\pi\rangle$ to $\langle V,\sigma\rangle$.

Note that our information entropy and co-entropy are not directly based on the blocks of a partition. Therefore, in general they do not satisfy the definition of expected granularity proposed in \cite{YZ09p}.

\section{Relationship of co-entropies between different universes}
In the last section, we have seen that if $f$ is a strict monomorphism from $\langle U,\pi\rangle$ to $\langle U,\sigma\rangle$, in particular, $\pi\prec\sigma$, then $\HH(\pi)>\HH(\sigma)$ and $\G(\pi)<\G(\sigma)$. In this section, we consider the monotonicities of $\HH$ and $\G$ for different universes. In other words, we compare $\HH(\pi)$ with $\HH(\sigma)$ and $\G(\pi)$ with $\G(\sigma)$ when there exists a strict monomorphism from $\langle U,\pi\rangle$ to $\langle V,\sigma\rangle$, where $|V|>|U|$. For convenience, we write $\langle U,\pi\rangle\hookrightarrow\langle V,\sigma\rangle$ if $|V|>|U|$ and there exists a strict monomorphism from $\langle U,\pi\rangle$ to $\langle V,\sigma\rangle$.

We start with the following observation on the entropy function $H$ and the co-entropy function $G$ reviewed in Section 2.2. Consider $\langle U_1,\pi_1\rangle=\langle\{1\},\{\{1\}\}\rangle$, $\langle U_2,\pi_2\rangle=\langle\{1,2\},\{\{1\},\{2\}\}\rangle$, and $\langle U_3,\pi_3\rangle=\langle\{1,2,3\},\{\{1,3\},\{2\}\}\rangle$. Clearly, $$\la U_1,\pi_1\ra\ha\la U_2,\pi_2\ra\ha\la U_3,\pi_3\ra.$$ It is easy to check by Definition \ref{DgranmeaG} that $H(\pi_1)=0$, $H(\pi_2)=1$, and $H(\pi_3)=\log3-\frac{2}{3}<1$. This means that the entropy function $H$ is not monotonic. By the way, we can get by a direct computation that $\G(\pi_1)=0$, $\G(\pi_2)=0$, and $\G(\pi_3)=\frac{1}{2}$.

Let us continue to discuss the monotonicity of co-entropy function $G$. Consider $\langle U_1,\pi_1\rangle=\langle\{1\},\{\{1\}\}\rangle$, $\langle U_2,\pi_2\rangle=\langle\{1,2\},\{\{1,2\}\}\rangle$, and $\langle U_3,\pi_3\rangle=\langle\{1,2,3\},\{\{1,2\},\{3\}\}\rangle$. Again, we see that $$\la U_1,\pi_1\ra\ha\la U_2,\pi_2\ra\ha\la U_3,\pi_3\ra.$$ It is easy to check by Definition \ref{DgranmeaRG} that $G(\pi_1)=0$, $G(\pi_2)=1$, and $G(\pi_3)=\frac{2}{3}$. This shows that the co-entropy function $G$ is not monotonic either. In this case, we can obtain by a direct computation that $\G(\pi_1)=0$, $\G(\pi_2)=\frac{1}{2}$, and $\G(\pi_3)=\frac{1}{2}$.

Finally, we address the monotonicity of entropy function $\HH$. Consider $\langle U_1,\pi_1\rangle=\langle\{1,2\},\{\{1,2\}\}\rangle$, $\langle U_2,\pi_2\rangle=\langle\{1,2,3\},\{\{1,2\},\{3\}\}\rangle$, and $\langle U_3,\pi_3\rangle=\langle\{1,2,3,4\},\{\{1,2,4\},\{3\}\}\rangle$. Obviously, we have that $$\la U_1,\pi_1\ra\ha\la U_2,\pi_2\ra\ha\la U_3,\pi_3\ra.$$ By a routine computation we can get that $\HH(\pi_1)=\frac{3}{2}$, $\HH(\pi_2)=\frac{5}{2}$, and $\HH(\pi_3)=\frac{13}{4}-\frac{3}{4}\log3<\frac{5}{2}$. Consequently, the entropy function $\HH$ is not monotonic either. On the other hand, it follows from Eq. (\ref{EqHHG}) that $\G(\pi_1)=\frac{1}{2}$, $\G(\pi_2)=\frac{1}{2}$, and $\G(\pi_3)=\frac{3}{4}+\frac{3}{4}\log3$.

As a result, in all the above three cases we always have that $$\G(\pi_1)\leq\G(\pi_2)\leq\G(\pi_3).$$ We thus conjecture that  $\G(\pi)\leq\G(\sigma)$ whenever $\langle U,\pi\rangle\hookrightarrow\langle V,\sigma\rangle$. Indeed, it holds true, as we will see later.

To prove the conjecture, it is convenient to introduce the following notion and a key lemma.

\begin{Def}\label{Doneext}
Let $\langle U,\pi\rangle$ be an approximation space and $a\not\in U$. The approximation space $\langle U\cup\{a\},\pi\cup\{\{a\}\}\rangle$ is called the {\rm one-point extension} of $\langle U,\pi\rangle$ by $a$. We say that $\langle V,\sigma\rangle$ is a one-point extension of $\langle U,\pi\rangle$ if $\langle V,\sigma\rangle=\langle U\cup\{a\},\pi\cup\{\{a\}\}\rangle$ for some $a$.
\end{Def}

For example, $\langle U_2,\pi_2\rangle=\langle\{1,2,3\},\{\{1,2\},\{3\}\}\rangle$ is the one-point extension of $\langle U_1,\pi_1\rangle=\langle\{1,2\},\{\{1,2\}\}\rangle$ by $3$.

The following lemma shows that one-point extension does not change co-entropy.
\begin{Lem}\label{Loneext}
Let $\langle V,\sigma\rangle$ be a one-point extension of $\langle U,\pi\rangle$. Then $\G(\sigma)=\G(\pi)$.
\end{Lem}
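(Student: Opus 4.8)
The plan is to set up an explicit bijection between the blocks of $\ms{P}(U)$ induced by $\pi$ and the blocks of $\ms{P}(V)$ induced by $\sigma$, under which corresponding blocks have cardinalities differing by exactly a factor of $2$; then the claimed equality $\G(\sigma)=\G(\pi)$ will drop out of the definition in Eq.~(\ref{Enovelcoentropy}) after accounting for the change in the normalizing factor from $2^n$ to $2^{n+1}$. Write $V=U\cup\{a\}$ and $\sigma=\pi\cup\{\{a\}\}$, and suppose $|U|=n$, so $|V|=n+1$. The first step is to understand how the approximation operators on $\langle V,\sigma\rangle$ relate to those on $\langle U,\pi\rangle$: for $Y\subseteq V$, write $Y=X$ or $Y=X\cup\{a\}$ with $X=Y\cap U\subseteq U$. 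Since $\{a\}$ is a block of $\sigma$, the singleton $\{a\}$ is definable in $\langle V,\sigma\rangle$, and one checks directly from the definitions of $\ul{app}$ and $\ol{app}$ that
\begin{align*}
\ul{app}_\sigma X &= \ul{app}_\pi X, & \ol{app}_\sigma X &= \ol{app}_\pi X,\\
\ul{app}_\sigma (X\cup\{a\}) &= \ul{app}_\pi X \cup\{a\}, & \ol{app}_\sigma (X\cup\{a\}) &= \ol{app}_\pi X \cup\{a\},
\end{align*}
for every $X\subseteq U$. In other words, adjoining $a$ to the universe and as its own block simply "carries $a$ along" in both approximations, without disturbing the approximation of the part lying in $U$.

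The second step is to read off the consequences for the partition of the power set. From the displayed identities, two subsets $Y_1,Y_2\subseteq V$ have the same rough set approximation in $\langle V,\sigma\rangle$ if and only if they agree on whether they contain $a$ and their traces $Y_1\cap U$, $Y_2\cap U$ have the same rough set approximation in $\langle U,\pi\rangle$. Hence each block $\A_i=\{X\subseteq U \mid (\ul{app}_\pi X,\ol{app}_\pi X)=(A_i,A'_i)\}$ of $\ms{P}(U)$ gives rise to exactly two blocks of $\ms{P}(V)$, namely $\A_i$ itself (with approximation $(A_i,A'_i)$) and $\{X\cup\{a\}\mid X\in\A_i\}$ (with approximation $(A_i\cup\{a\},A'_i\cup\{a\})$), and every block of $\ms{P}(V)$ arises this way. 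Therefore $\mathcal{A}(V,\sigma)$ has $2m$ elements, and the list of block-sizes for $\sigma$ is $r_1,r_1,r_2,r_2,\ldots,r_m,r_m$, i.e.\ each $r_i$ appears twice.

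The third and final step is the computation. Using Definition~\ref{Dnovelcoentropy} with universe $V$ of size $n+1$,
\begin{equation*}
\G(\sigma)=\sum_{i=1}^{m}\left(\frac{r_i}{2^{n+1}}\log r_i+\frac{r_i}{2^{n+1}}\log r_i\right)=\sum_{i=1}^{m}\frac{2r_i}{2^{n+1}}\log r_i=\sum_{i=1}^{m}\frac{r_i}{2^{n}}\log r_i=\G(\pi),
\end{equation*}
which is the assertion. (One should note the harmless convention that a summand $0\log 0$ is read as $0$, as already remarked after Theorem~\ref{Pentfunc}, so the degenerate block $(\emptyset,\emptyset)$ with $r_i=1$, $\log r_i=0$ causes no trouble.) I expect the only real work to be the verification of the four displayed operator identities in the first step — everything afterward is bookkeeping — and even that verification is short: it amounts to observing that the definable sets of $\langle V,\sigma\rangle$ are exactly the sets of the form $D$ or $D\cup\{a\}$ with $D$ definable in $\langle U,\pi\rangle$, after which the "greatest definable subset" and "least definable superset" characterizations of $\ul{app}$ and $\ol{app}$ give the identities immediately.
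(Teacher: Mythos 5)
Your proof is correct and follows essentially the same route as the paper's: both identify $\mathcal{A}(V,\sigma)$ as $\mathcal{A}(U,\pi)\cup\{(A_i\cup\{a\},A'_i\cup\{a\})\}$, observe that each of the $2m$ resulting blocks of $\ms{P}(V)$ has size $r_i$, and conclude by the same two-term computation with the normalizer $2^{n+1}$. The only difference is that you make explicit the four approximation-operator identities that the paper leaves implicit, which is a reasonable addition.
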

\begin{proof}
Suppose that $\pi=\{U_1,U_2,\ldots,U_k\}$ and $\sigma=\{U_1,U_2,\ldots,U_k,\{a\}\}$, where $a\not\in U$; assume that $\mathcal{A}(U,\pi)=\{(A_i,A'_i)\,|\,1\leq i\leq m\}$ and $\A_i=\left\{X\subseteq U\,\Bigl|\,\left(\ul{app}_\pi X, \ol{app}_\pi X\right)=(A_i,A'_i)\right\}$ with $|\A_i|=r_i$.
It thus follows that $$\mathcal{A}(V,\sigma)=\mathcal{A}(U,\pi)\cup\left\{\left(A_i\cup\{a\},A'_i\cup\{a\}\right)\,|\,1\leq i\leq m\right\}.$$For any $(B_i,B'_i)\in\mathcal{A}(V,\sigma)$, we write $\mathcal{B}_i$ for $\left\{X\subseteq V\,\Bigl|\,\left(\ul{app}_\sigma X, \ol{app}_\sigma X\right)=(B_i,B'_i)\right\}$ and $s_i$ for $|\mathcal{B}_i|$. If $(B_i,B'_i)=(A_i,A'_i)\in\mathcal{A}(U,\pi)$, then we see that $\mathcal{B}_i=\A_i$ and thus $s_i=r_i$ in this case. If $(B_i,B'_i)=\left(A_i\cup\{a\},A'_i\cup\{a\}\right)\in\left\{\left(A_i\cup\{a\},A'_i\cup\{a\}\right)\,|\,1\leq i\leq m\right\}$, then we have that $\mathcal{B}_i=\{X\cup\{a\}\,|\,X\in\A_i\}$ and $s_i=r_i$ still holds in this case. Therefore, we get by Definition \ref{Dnovelcoentropy} that
\begin{eqnarray*}
\G(\sigma)&=&\sum_{i=1}^m\frac{r_i}{2^{n+1}}\log {r_i}+\sum_{i=1}^m\frac{r_i}{2^{n+1}}\log {r_i}\\
  &=&\sum_{i=1}^m\frac{r_i}{2^{n}}\log {r_i} \\
  &=&\G(\pi),
\end{eqnarray*}finishing the proof of the lemma.
\end{proof}

For subsequent need, we would like to generalize Definition \ref{Doneext} as follows.
\begin{Def}\label{Dmoneext}
Let $\langle U,\pi\rangle$ and $\langle V,\sigma\rangle$ be two approximation spaces. We say that $\langle V,\sigma\rangle$ is a {\rm multi-one-point extension} of $\langle U,\pi\rangle$ if there are approximation spaces $\langle U_i,\pi_i\rangle$, $0\leq i\leq l$, with $\langle U_0,\pi_0\rangle=\langle U,\pi\rangle$ and $\langle U_l,\pi_l\rangle=\langle V,\sigma\rangle$ such that each $\langle U_i,\pi_i\rangle$, $1\leq i\leq l$, is a one-point extension of $\langle U_{i-1},\pi_{i-1}\rangle$.
\end{Def}

For example, $\langle V,\sigma\rangle=\langle \{1,2,3,4\},\{\{1,2\},\{3\},\{4\}\}\rangle$ is a multi-one-point extension of $\langle U,\pi\rangle=\langle \{1,2\},\{\{1,2\}\}\rangle$. In fact, we may take $\langle U_0,\pi_0\rangle=\langle U,\pi\rangle$, $\langle U_1,\pi_1\rangle=\langle \{1,2,3\},\{\{1,2\},\{3\}\}\rangle$, and $\langle U_2,\pi_2\rangle=\langle V,\sigma\rangle$.

The following fact follows immediately from Lemma \ref{Loneext}.
\begin{Coro}\label{Cmulti}
If $\langle V,\sigma\rangle$ is a multi-one-point extension of $\langle U,\pi\rangle$, then $\G(\sigma)=\G(\pi)$.
\end{Coro}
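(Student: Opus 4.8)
The plan is to proceed by induction on the length $l$ of the chain $\langle U_0,\pi_0\rangle,\langle U_1,\pi_1\rangle,\ldots,\langle U_l,\pi_l\rangle$ witnessing, as in Definition \ref{Dmoneext}, that $\langle V,\sigma\rangle$ is a multi-one-point extension of $\langle U,\pi\rangle$. The base case $l=0$ is immediate, since then $\langle V,\sigma\rangle=\langle U,\pi\rangle$ and there is nothing to prove.

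For the inductive step, suppose the claim holds for all chains of length $l-1$. Given a chain of length $l$ as above, the truncated chain $\langle U_0,\pi_0\rangle,\ldots,\langle U_{l-1},\pi_{l-1}\rangle$ exhibits $\langle U_{l-1},\pi_{l-1}\rangle$ as a multi-one-point extension of $\langle U,\pi\rangle$, so the induction hypothesis yields $\G(\pi_{l-1})=\G(\pi)$. On the other hand, $\langle V,\sigma\rangle=\langle U_l,\pi_l\rangle$ is by hypothesis a one-point extension of $\langle U_{l-1},\pi_{l-1}\rangle$, so Lemma \ref{Loneext} gives $\G(\sigma)=\G(\pi_{l-1})$. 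Combining the two equalities gives $\G(\sigma)=\G(\pi)$, which completes the induction.

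There is no genuine obstruction here: the corollary is a telescoping application of Lemma \ref{Loneext}, and the only point needing minimal care is the bookkeeping that the truncated chain still satisfies Definition \ref{Dmoneext}, namely that each of its consecutive steps remains a one-point extension — which is immediate from the original chain. Alternatively, one may dispense with explicit induction and simply write $\G(\pi)=\G(\pi_0)=\G(\pi_1)=\cdots=\G(\pi_l)=\G(\sigma)$, where every intermediate equality $\G(\pi_{i-1})=\G(\pi_i)$ is an instance of Lemma \ref{Loneext}.
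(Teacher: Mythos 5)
Your proof is correct and matches the paper's intent exactly: the paper states the corollary "follows immediately from Lemma \ref{Loneext}," and your induction (equivalently, the telescoping chain $\G(\pi_0)=\G(\pi_1)=\cdots=\G(\pi_l)$) is precisely the routine iteration of that lemma that the authors leave implicit.
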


In light of the above corollary, let us refer to multi-one-point extensions as one-point extensions for simplicity. Further, we have the following observation.
\begin{Thm}\label{Tmultiext}Suppose that there is a monomorphism from $\langle U,\pi\rangle$ to $\langle V,\sigma\rangle$. If there exists $\langle U',\pi'\rangle$ that satisfies the following two conditions:
\begin{enumerate}[(a)]
  \item either $\langle U',\pi'\rangle=\langle U,\pi\rangle$ or $\langle U',\pi'\rangle$ is a one-point extension of $\langle U,\pi\rangle$,
  \item $\langle U',\pi'\rangle$ is isomorphic to $\langle V,\sigma\rangle$,
\end{enumerate}
then $\G(\pi)=\G(\sigma)$; otherwise, $\G(\pi)<\G(\sigma)$.
\end{Thm}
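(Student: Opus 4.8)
The plan is to reduce the statement, whatever the gap $|V|-|U|$ may be, to the equal‑cardinality results of Section~3. Fix a monomorphism $f\colon\langle U,\pi\rangle\to\langle V,\sigma\rangle$; injectivity forces $t:=|V|-|U|\ge 0$. Pick $t$ fresh points $a_1,\dots,a_t\notin U$ and set $\langle U^{*},\pi^{*}\rangle=\langle U\cup\{a_1,\dots,a_t\},\ \pi\cup\{\{a_1\},\dots,\{a_t\}\}\rangle$; this is $\langle U,\pi\rangle$ itself when $t=0$ and a multi‑one‑point extension of $\langle U,\pi\rangle$ (Definition~\ref{Dmoneext}) when $t\ge 1$, so in either case $\G(\pi^{*})=\G(\pi)$ by Corollary~\ref{Cmulti}. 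Extend $f$ to $f^{*}\colon U^{*}\to V$ by letting it agree with $f$ on $U$ and mapping $\{a_1,\dots,a_t\}$ bijectively onto $V\setminus f(U)$. The point of the padding is that $f^{*}$ is automatically a homomorphism: blocks of $\pi$ land inside blocks of $\sigma$ because $f$ is a homomorphism, and each new block $\{a_s\}$, being a singleton, trivially lies inside the block of $\sigma$ containing $f^{*}(a_s)$. Hence $f^{*}$ is a monomorphism and, crucially, $|U^{*}|=|V|$.

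Applying Proposition~\ref{Phomocoentropy}(1) to $f^{*}$ gives $\G(\pi^{*})\le\G(\sigma)$, that is $\G(\pi)\le\G(\sigma)$, so the whole theorem reduces to characterizing when equality occurs. One direction is immediate: if some $\langle U',\pi'\rangle$ satisfies (a) and (b), then $\G(\pi)=\G(\pi')$ (trivially when $\langle U',\pi'\rangle=\langle U,\pi\rangle$, and by Corollary~\ref{Cmulti} when it is a one‑point extension), while $\G(\pi')=\G(\sigma)$ by Proposition~\ref{Phomocoentropy}(3), so $\G(\pi)=\G(\sigma)$.

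For the converse, assume $\G(\pi)=\G(\sigma)$, hence $\G(\pi^{*})=\G(\sigma)$. By the contrapositive of Proposition~\ref{Phomocoentropy}(2), $f^{*}$ is not a strict monomorphism; since $|U^{*}|=|V|$, the second alternative of Definition~\ref{Dhomo}(3) cannot hold, so non‑strictness means there is no block $C$ of $\pi^{*}$ with $f^{*}(C)\subsetneq D$ for any block $D$ of $\sigma$. But $f^{*}$ is a homomorphism, so each $f^{*}(C)$ is contained in some block of $\sigma$; combined with the previous sentence, $f^{*}(C)$ is itself a block of $\sigma$ for every $C\in\pi^{*}$. As $f^{*}$ is a bijection of universes, $\{f^{*}(C):C\in\pi^{*}\}$ is a partition of $V$ contained in $\sigma$, hence equal to $\sigma$; thus $(f^{*})^{-1}$ also carries blocks to blocks, and $f^{*}$ is an isomorphism. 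Taking $\langle U',\pi'\rangle=\langle U^{*},\pi^{*}\rangle$ then witnesses (a) and (b). Together with the inequality $\G(\pi)\le\G(\sigma)$ this closes the dichotomy: if no such $\langle U',\pi'\rangle$ exists then $\G(\pi)\ne\G(\sigma)$, and therefore $\G(\pi)<\G(\sigma)$.

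The one step that needs genuine care is the last inference—that a non‑strict monomorphism between two approximation spaces of the same cardinality must send the source partition exactly onto the target partition, hence be an isomorphism; everything else is bookkeeping on top of Corollary~\ref{Cmulti} and Proposition~\ref{Phomocoentropy}. (A self‑contained alternative that avoids Proposition~\ref{Phomocoentropy}: a double count over the three states—inside, outside, boundary—a block can occupy in a rough approximation yields $\G(\pi)=\sum_{i}g(n_i)$ with $g(t)=(1-2^{1-t})\log(2^{t}-2)$ and $g(1)=0$; since $g$ is strictly increasing and strictly superadditive on the positive integers while a monomorphism packs the $\pi$‑blocks disjointly into the $\sigma$‑blocks, the same inequality and equality analysis follow, with equality exactly when $\sigma$'s block‑size multiset is that of $\pi$ padded with $1$'s.)
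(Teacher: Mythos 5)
The proposal is correct and is essentially the paper's own argument: both equalize the two cardinalities by padding with singleton blocks (one-point extensions, which preserve $\G$ by Corollary \ref{Cmulti}) and then invoke the equal-cardinality comparison of Proposition \ref{Phomocoentropy} --- you pad the domain $U$ to $U^{*}$ and push forward, while the paper pushes $\langle U,\pi\rangle$ forward to $\langle f(U),f(\pi)\rangle$ and pads inside $V$ via $\sigma_2=\sigma_1\cup\{\{a\}\mid a\in V\setminus f(U)\}$, which is the same construction transported along $f^{*}$. Your explicit argument that $\G(\pi)=\G(\sigma)$ forces the non-strict monomorphism $f^{*}$ to map $\pi^{*}$ exactly onto $\sigma$, and hence produces the witness $\langle U',\pi'\rangle$, is in fact a little more careful than the paper at the one delicate point (the paper infers $\sigma_2\prec\sigma$ from strictness of $f$ alone, whereas it really follows from the nonexistence of the witness); your closing parenthetical with the additive formula $\G(\pi)=\sum_i\bigl(1-2^{1-n_i}\bigr)\log\bigl(2^{n_i}-2\bigr)$ is a genuinely different and self-contained route, but it is only sketched, so the padding argument remains your actual proof.
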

\begin{proof}
We first consider the case that there exists $\langle U',\pi'\rangle$ that satisfies the conditions (a) and (b). In this case, if $\langle U',\pi'\rangle=\langle U,\pi\rangle$ and $\langle U',\pi'\rangle$ is isomorphic to $\langle V,\sigma\rangle$, then $|V|=|U|$ and we see by Proposition \ref{Phomocoentropy} that $\G(\pi)=\G(\sigma)$. If $\langle U',\pi'\rangle$ is a one-point extension of $\langle U,\pi\rangle$ and $\langle U',\pi'\rangle$ is isomorphic to $\langle V,\sigma\rangle$, then we get that $\G(\pi)=\G(\pi')$ by Corollary \ref{Cmulti} and $\G(\pi')=\G(\sigma)$ by Proposition \ref{Phomocoentropy}. Consequently,  $\G(\pi)=\G(\sigma)$.

We now consider the case that there does not exist $\langle U',\pi'\rangle$ such that the conditions are satisfied. It forces that the monomorphism, say $f$, from $\langle U,\pi\rangle$ to $\langle V,\sigma\rangle$ is strict. Two cases need to consider. One is that $|V|=|U|$. In this case, it follows from Proposition \ref{Phomocoentropy} that $\G(\pi)<\G(\sigma)$. The other case is that $|V|>|U|$. In this case, let us set $\langle V_1,\sigma_1\rangle=\langle f(U),f(\pi)\rangle$, where $f(U)$ is the image of $U$ under $f$ and $f(\pi)=\{f(U')\,|\,U'\in\pi\}$. In fact, $f$ gives rise to an isomorphism between $\langle U,\pi\rangle$ and $\langle V_1,\sigma_1\rangle$. Therefore, $\G(\pi)=\G(\sigma_1)$. Note that $V_1=f(U)\subseteq V$. We now take $\langle V_2,\sigma_2\rangle$ as follows:
$$V_2=V, \  \sigma_2=\sigma_1\cup\left\{\{a\}\,|\,a\in V\backslash V_1\right\}.$$
It follows that $\langle V_2,\sigma_2\rangle$ is a one-point extension of $\langle V_1,\sigma_1\rangle$. Hence, $\G(\sigma_1)=\G(\sigma_2)$. Because $f$ is a strict monomorphism, we see that $\langle U,\pi \rangle\hookrightarrow\langle V_2,\sigma_2\rangle$ and $\sigma_2\prec\sigma$. Whence, we get $\G(\sigma_2)<\G(\sigma)$ by Theorem \ref{Pcoentfunc}. As a result, $\G(\pi)<\G(\sigma)$. This completes the proof of the theorem.
\end{proof}

Let us provide an informal explanation of Theorem \ref{Tmultiext}. The hypothesis that there is a monomorphism from $\langle U,\pi\rangle$ to $\langle V,\sigma\rangle$ means that $\langle U,\pi\rangle$ is finer than $\langle V,\sigma\rangle$. In the special case that the monomorphism is not strict, we have that $\langle U,\pi\rangle$ and $\langle V,\sigma\rangle$ are  isomorphic, and thus, they have the same co-entropy. If the monomorphism is strict, then after renaming the elements of $U$, we can get a finer partition than $\langle V,\sigma\rangle$ by using one-point extensions. Theorem \ref{Tmultiext} says that $\G(\pi)<\G(\sigma)$ if $\pi$ is finer than $\sigma$.

We end this section with several examples.
\begin{Eg}
A trivial example is that $\langle U,\pi\rangle=\langle \{1,2,3\},\{\{1,2\},\{3\}\}\rangle$ and $\langle V,\sigma\rangle=\langle \{a,b,c\},\{\{a,b\},\{c\}\}\rangle$. The mapping $f$ that maps $1$, $2$, and $3$ to $a$, $b$, and $c$ respectively is a monomorphism. In fact, $f$ is an isomorphism. Hence, $\G(\pi)=\G(\sigma)$. A direct computation shows that $\G(\pi)=0.5=\G(\sigma)$.

Consider $\langle U,\pi\rangle=\langle \{1,2\},\{\{1,2\}\}\rangle$ and $\langle V,\sigma\rangle=\langle \{a,b,c,d\},\{\{a,b\},\{c\},\{d\}\}\rangle$. The mapping $f$ that maps $1$ and $2$ to $a$ and $b$ respectively is a monomorphism, which yields that $\langle U,\pi\rangle$ is isomorphic to $\langle V_1,\sigma_1\rangle=\langle \{a,b\},\{\{a,b\}\}\rangle$. Clearly, we can get $\langle V,\sigma\rangle$ by one-point extensions of $\langle V_1,\sigma_1\rangle$. Therefore, $\G(\pi)=\G(\sigma)$. On the other hand, we can get by a computation that $\G(\pi)=0.5=\G(\sigma)$.

Finally, consider $\langle U,\pi\rangle=\langle \{1,2\},\{\{1,2\}\}\rangle$ and $\langle V,\sigma\rangle=\langle \{a,b,c,d\},\{\{a,b\},\{c,d\}\}\rangle$. As mentioned earlier, the mapping $f$ that maps $1$ and $2$ to $a$ and $b$ respectively is a monomorphism, which gives an isomorphism between $\langle U,\pi\rangle$ and $\langle V_1,\sigma_1\rangle=\langle \{a,b\},\{\{a,b\}\}\rangle$. We can get $\langle V_2,\sigma_2\rangle=\langle \{a,b,c,d\},\{\{a,b\},\{c\},\{d\}\}\rangle$ by one-point extensions of $\langle V_1,\sigma_1\rangle$. Clearly, $\sigma_2\prec\sigma$. As a result, $\G(\pi)<\G(\sigma)$. On the other hand, we can obtain by a direct computation that $\G(\pi)=0.5$ and $\G(\sigma)=0.75$.
\end{Eg}

\section{Conclusion}
In this paper, we have proposed the novel notions of entropy and co-entropy by taking both partitions and the lower and upper approximations into account. Some desirable properties of the entropy and co-entropy have been presented. Furthermore, we have investigated the relationship of co-entropies between different universes.

There are several problems which are worth further studying. Firstly, the present work focuses on the classical rough sets based on partitions. It would be interesting to generalize the notions of entropy and co-entropy here into the framework of covering rough sets \cite{Bry89,Pom87,Zak83} or fuzzy rough sets \cite{DP90}. It is also interesting to compare the entropies (co-entropies) under some special homomorphisms such as neighborhood-consistent functions introduced in \cite{zhu2010some}. Secondly, it remains to develop the corresponding roughness measure based on the entropy or co-entropy for measuring numerically the
roughness of an approximation. Finally, the conditioned entropy and conditioned co-entropy \cite{bianucci2009information} are yet to be addressed in our
framework.

\section*{Acknowledgements}
This work was supported by the National Natural Science Foundation of China under Grants 60821001, 60873191, 60903152, and 61070251.


\end{document}